\newtheorem{theorem}{Theorem}
\begin{document}

\title{The Power of Graph Signal Processing for Chip Placement Acceleration}

\author{Yiting Liu}
\affiliation{%
  {\normalsize \department{School of Computer Science}}
  \normalsize \institution{Fudan University}
  \normalsize \city{Shanghai}
  \normalsize \country{China}}
\normalsize \email{yitingliu20@fudan.edu.cn}

\author{Hai Zhou}
\affiliation{%
{\normalsize \department{Department of ECE}}
\normalsize \institution{Northwestern University}
\normalsize  \city{Evanston}
 \normalsize \country{USA}}
\normalsize\email{haizhou@northwestern.edu}

\author{Jia Wang}
\affiliation{%
  {\normalsize \department{Department of ECE}}
 \normalsize \institution{IIIinois Institute of Technology}
 \normalsize \city{Chicago}
 \normalsize \country{USA}
 }
\email{jwang@ece.iit.edu}

\author{Fan Yang}
\affiliation{%
  {\normalsize \department{State Key Lab of ASIC \& System, Microelectronics Department}}
  \normalsize\institution{Fudan University}
 \normalsize \city{Shanghai}
 \normalsize \country{China}}
\normalsize \email{yangfan@fudan.edu.cn}

\author{Xuan Zeng}
\affiliation{%
  {\normalsize \department{State Key Lab of ASIC \& System, Microelectronics Department}}
  \normalsize\institution{Fudan University}
 \normalsize \city{Shanghai}
 \normalsize \country{China}}
\normalsize \email{xzeng@fudan.edu.cn}

\author{Li Shang}
\authornote{Corresponding author.}
\affiliation{%
  {\normalsize \department{School of Computer Science}}
  \normalsize \institution{Fudan University}
 \normalsize \city{Shanghai}
  \normalsize \country{China}}
 \normalsize \email{lishang@fudan.edu.cn}

\begin{abstract}

Placement is a critical task with high computation complexity in VLSI physical design. Modern analytical placers formulate the placement objective as a nonlinear optimization task, which suffers a long iteration time. To accelerate and enhance the placement process, recent studies have turned to deep learning-based approaches, particularly leveraging graph convolution networks (GCNs). However, learning-based placers require time- and data-consuming model training due to the complexity of circuit placement that involves large-scale cells and design-specific graph statistics.

This paper proposes GiFt, a parameter-free technique for accelerating placement, rooted in graph signal processing. 
GiFt excels at capturing multi-resolution smooth signals of circuit graphs to generate optimized placement solutions without the need for time-consuming model training, and meanwhile significantly reduces the number of iterations required by analytical placers. Experimental results show that GiFt significantly improving placement efficiency, while achieving competitive or superior performance compared to state-of-the-art placers. In particular, compared to DREAMPlace, the recently proposed GPU-accelerated analytical placer, GF-Placer improves total runtime over 45\%.
\end{abstract}

\keywords{graph signal processing, graph convolution, graph filter, placement, physical design}
\maketitle
\section{Introduction}

Placement is a critical and complex task in VLSI physical designs. It determines the locations of connected cells to minimize total wirelength while adhering to constraints (e.g. density constraints).
Despite continuous efforts to accelerate and improve the placement process, it remains challenging due to the growing complexity of modern designs.
The state-of-the-art analytical placers~\cite{2015eplace, replace} model the placement problem as an electrostatic system and iteratively optimize it, which involves long iteration time.
Recently proposed DREAMPlace~\cite{dreamplace3} has enhanced this process via GPU acceleration. However, long iterations remain a challenge to analytical placers. 

To expedite the placement process, deep learning-based methods have drawn significant attention.
Recently proposed graph convolution network (GCN) based methods represent circuit netlist as graph and produce initial chip placement through efficient model inference. For instance, GraphPlanner~\cite{graphplanner} utilizes a variational GCN-based model to generate initial placement solutions, streamlining the placement procedure for improved efficiency. Researches~\cite{CY2021, google21} apply GCNs and their variants to encode the connectivity information for generating node embeddings. These embeddings are subsequently employed in reinforcement learning calculations to yield superior placement results.

However, existing empirical GCN-based approaches face two key limitations. 
Firstly, these approaches require time-consuming parameter learning with high computation and memory costs.  
Specifically, GCN-based deep models equipped with a large number of learnable parameters demand a significant amount of training data and considerable training time. As the scale of modern designs continues to increase, the computational complexity and memory demands of GCN-based deep models increase accordingly.  
Secondly, it is challenging to guarantee the generalization of GCN-based models. The statistical characteristics of graph structures across various designs can vary significantly, and the existence of predetermined fixed cells (e.g. fixed IOs) exacerbates the issue. Consequently, even well-trained GCNs may struggle to generalize to other unseen designs, limiting their practical applicability. 
Therefore, an interesting question arises: \emph{How to minimize the required optimization iterations of analytical solvers and yet avoid the high training costs of GCN-based methods? }

This paper presents GiFt, an efficient parameter-free approach for accelerating chip placement. GiFt can be seamlessly integrated with modern analytical placers to construct an ultra-fast placement flow GiFt-Placer, which significantly minimizes the numerous optimization iterations of placers without the need for time-consuming model training.
In essence, GiFt is theoretically rooted in graph signal processing (GSP) and we emphasize the crucial role of \emph{smoothness}, a key concept in GSP, in enhancing chip placement. Specifically, GiFt functions as a multi-frequency \emph{Graph Filter} with low computation complexity to promote both local and global signal smoothness on circuit graphs. This feature facilitates the generation of optimized initial placement solutions, which drives the subsequent placers to generate high-quality placement results with highly reduced iteration time.
Moreover, we prove that both the classic eigenvector-based placers~\cite{eigen_placer, eigen_placer2} and recently emerged GCN-based placers~\cite{graphplanner, CY2021} are special cases of GiFt-Placer. Our work further points out that these approaches introduce redundant computations, which are unnecessary for high-quality chip placement. Experimental results on academic benchmarks show that GiFt-Placer significantly improves placement efficiency, while achieving competitive or superior performance compared to state-of-the-art placers. In particular, compared to DREAMPlace, the recently proposed GPU-accelerated analytical placer, GiFt-DREAMPlace improves runtime over 45\%.

The key contributions are listed as follows.

\begin{enumerate}

\item We propose a new multi-frequency graph filter-based placement acceleration approach called GiFt, which can comprehensively capture the graph structure and efficiently generate optimized cell locations. 

\item GiFt offers high efficiency through sparse matrix multiplication and does not require a time-consuming model training process due to its parameter-free nature.

\item GiFt can be seamlessly integrated with modern analytical placers to construct GiFt-Placer placement flow, which effectively reduces the number of iterations required for analytical placers and significantly improves placement efficiency, even surpassing the performance of analytical placers running on GPU versions.

\item By examining classic eigenvector-based placers and recent GCN-based placers from the perspective of graph signal processing, we prove that they can be considered as special cases of GiFt-Placer, but they introduce unnecessary computation costs to chip placement.


\end{enumerate}

The rest of this paper is organized as follows.  Section~\ref{sec:pre} describes preliminaries for the rest of this paper. Section~\ref{sec:method} introduces the details of the proposed approach. Section~\ref{sec:experiment} presents the experimental results. Section~\ref{sec:discussion} discusses the theoretical foundations of this work. We conclude the paper in Section~\ref{sec:conclusion}.

\section{Preliminaries} \label{sec:pre}
This section introduces the fundamental knowledge of graph signal processing (GSP) and global placement.

\subsection{Graph Signal Processing}~\label{sec:gsp}
Given a weighted undirected graph $G = (V, E)$ with a set of nodes $V$ and edges $E$, it can be represented as an adjacency matrix $A =\{w_{i,j}\}\in \mathbb{R}^{N \times N}$ with $w_{i,j}>0$ if node $v_i$ and $v_j$ are connected by edges, and $w_{i,j}=0$ otherwise. The graph Laplacian matrix is defined as $L = D-A$, where $D =diag(d_1, d_2, \ldots, d_n )\in \mathbb{R}^{N \times N}$ represents the degree matrix of $A$ with $d_i=\sum_{v_j \in V}w_{i,j}$. The normalized graph Laplacian matrix is defined as $\tilde{L}=D^{-\frac{1}{2}}LD^{-\frac{1}{2}}$.

The graph signal $g$ is defined as a mapping $g : V \to \mathbb{R}$ and it can be expressed as an n-dimensional vector where each element $g_i$ can encapsulate diverse types of information associated with node $v_i$~\cite{graph_signal}. 
GSP is a field dedicated to the analysis and manipulation of these signals on graphs, intending to extract meaningful insights or accomplish various tasks.

\textbf{Smoothness.}
The graph gradient of signal $g$ at node $v_i$ is defined as $\nabla_ig:=[\{\frac{\partial f}{\partial e}|_i\}_{e \in E} ]=[\sqrt{w_{i,j}}(g_i-g_j)].$
The smoothness of the graph signal can be quantified using the graph Laplacian quadratic form, as calculated by the following equation:
\begin{equation}\label{eq:smoothness}
S(g)=\frac{1}{2}\sum_{v_i\in V}||\nabla_ig||_2^2=\sum_{(v_i,v_j)\in E}w_{i,j}(g_j-g_i)^2
\end{equation}

\textbf{Graph Fourier Transform.}
Since the graph Laplacian matrix $L$ is real and symmetric, it can be decomposed into $L=U\Lambda U^T$, where $\Lambda=diag(\lambda_1, \ldots, \lambda_n) $ denotes eigenvalues with $\lambda_1 \leq \lambda_2, \ldots, \leq \lambda_n$ and $U=[u_1, \ldots, u_n]$ is the corresponding eigenvectors. Using eigenvector matrix $U$ as the Graph Fourier Transform (GFT) basis, we call $\hat{g}=U^Tg$ as GFT that transforms the graph signal $g$ from the spatial domain to the spectral domain. And its inverse transform is defined as $g = U\hat{g}$. 



\textbf{Graph Filter.}
In the spectral domain, the graph filter can be used to selectively filter some unwanted frequencies present in the graph signal, which is formally defined as:
\begin{equation}\label{eq:filter}
\mathcal{H}=Udiag(h(\lambda_1),h(\lambda_2), \ldots, h(\lambda_n))U^T,
\end{equation}
where $h(\cdot)$ is a filter function applied to frequencies.



\subsection{Global Placement}

A placement instance can be formulated as a graph $G=(V, E)$ with a set of objects $V$ (e.g. IOs, macros, and standard cells) and edges $E$.
The main objective of placement $f(g)$ (see Eq.~\ref{eq:global_placement_eq}) is to find a solution $g \in \mathbb{R}^{N \times 2} $ with minimized total wirelength $S(g)$ subject to density constraints (i.e., the density $\rho_b(g)$ does not exceed a predetermined density $\rho_t$), where $N$ is the number of objects.

\begin{equation}\label{eq:global_placement_eq}
\begin{aligned}
&\min f(g) = S(g) \quad\quad s.t. \quad \rho_b(g) \leq \rho_t. 
\end{aligned}
\end{equation}

The total wirelength $S(g)$ can be estimated as the weighted sum of squared distances between connected objects:

\begin{equation}\label{eq:wire_eq}
\begin{aligned}
S(g) = \sum_{(v_i,v_j)\in E} w_{i,j}((x_i- x_j)^2 + (y_i- y_j)^2).
\end{aligned}
\end{equation}
Considering one net with $M$ pins, we set the weight $w = \frac{2}{M}$.

The placement area can be evenly partitioned into a series of grids (bins)~\cite{2015eplace}. The density of each grid $b$ is computed using the formula in Eq.~\ref{eq:density_eq}:
\begin{equation}\label{eq:density_eq}
\rho_b(g) = \sum_{v \in V} l_x(v, b)l_y(v, b)
\end{equation}
where $l_x(v, b)$ and $l_y(v, b)$ quantify the horizontal and vertical overlaps between the grid $b$ and the object $v$.

\section{Method} \label{sec:method}
In this section, we first explore the placement process from the perspective of graph signal processing and highlight the importance of smoothness for chip placement.
Then we present GiFt, the parameter-free placement acceleration approach, and demonstrate its efficacy via theoretical analysis. Finally, we prove that both the classic eigenvector-based placer~\cite{eigen_placer, eigen_placer2} and recently emerged GCN-based placer~\cite{graphplanner, CY2021} are special cases of the proposed approach, while both of them introduce extra computations and complexity.

\begin{figure*}[htbp]
\includegraphics[width=\textwidth]{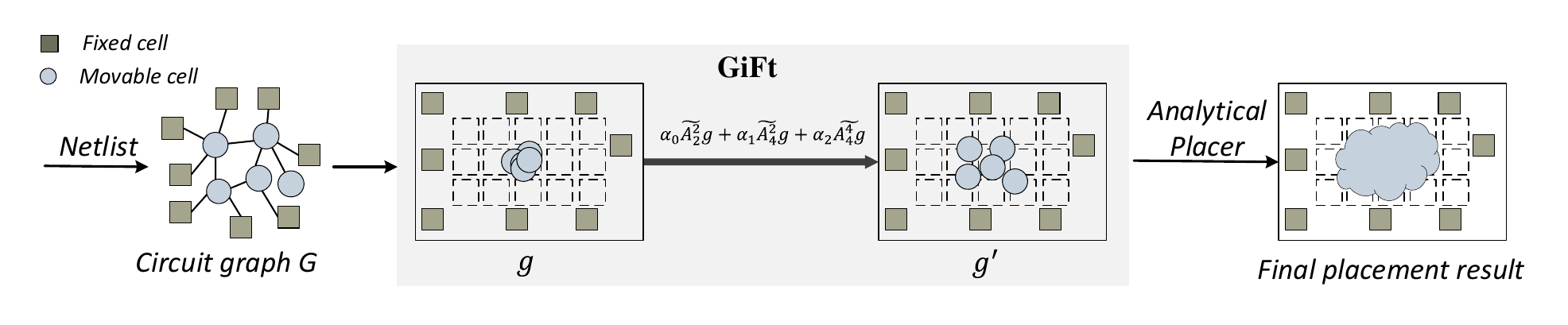}
\centering  
\vspace{-2.5em}
\caption{The workflow of GiFt-equipped placement process.}\label{fig:workflow}
\vspace{-1.5em}
\end{figure*}

\subsection{Enhancing Placement from GSP Perspective}~\label{sec:method1}
As demonstrated in Eq.~\ref{eq:smoothness} and Eq.~\ref{eq:wire_eq}, the optimization objective in placement, which aims to minimize quadratic wirelength, aligns with the goal of enhancing graph signal smoothness from the perspective of GSP. Inspired by this finding, we reframe the placement problem as an effort to improve the smoothness of graph signals on given circuit graphs.
Next, we will explain how we can improve the smoothness of graph signals across the graph.

The normalized form of the smoothness measurement (as depicted in Eq.~\ref{eq:smoothness}) can be calculated using the Rayleigh quotient, as shown below:
\begin{equation}\label{eq:smoothness_ray}
R(g)=\frac{S(g)}{||g||_2^2}=\frac{\sum_{(v_i,v_j)\in E}w_{i,j}(g_j-g_i)^2}{\sum_{v_i \in V}g_i^2}=\frac{g^TLg}{g^Tg}.
\end{equation}
The smaller value of $R(g)$ indicates higher smoothness of graph. 
Leveraging the GFT, we can transform Eq.~\ref{eq:smoothness_ray} as follows:
\begin{equation}\label{eq:smoothness_ray2}
R(g)=\frac{g^TLg}{g^Tg}=\frac{g^TU\Lambda U^Tg}{g^TUU^Tg}=\frac{\hat{g}^T\Lambda\hat{g}}{\hat{g}^T\hat{g}}=\frac{\sum_{v_i \in V} \lambda_i\hat{g_i}^2}{\sum_{v_i \in V}\hat{g_i}^2}.
\end{equation}
It is evident from Eq.~\ref{eq:smoothness_ray2} that $R(g_i)=\lambda_i$, where $g_i$ denotes the graph signal defined on node $v_i$, and $\lambda_i$ denotes $i$-th eigenvalue of the graph Laplacian. This demonstrates that graph signals associated with lower eigenvalues (i.e., lower frequencies) exhibit a higher degree of smoothness.
Consequently, it becomes essential to filter out undesired frequencies, particularly the high-frequency components in graph signals, to enhance the smoothness of these signals, thereby obtaining optimized placement solutions.

Building on this insight, we present our \textbf{placement framework through the lens of GSP}:

Given the input graph signals $g \in \mathbb{R}^{N \times 2}$, which can represent the initial cell locations, potentially containing unwanted noises, we apply a graph filter $\mathcal{H}$ to $g$ to filter out undesired frequencies and obtain the filtered signals $g^{\prime} \in \mathbb{R}^{N \times 2}$, which represents the cell locations processed by the graph filter. The framework can be expressed as follows:
\begin{equation}\label{eq:unified_framework}
g^{\prime}=\mathcal{H}g=Udiag(h(\lambda_1),h(\lambda_2), \ldots, h(\lambda_n))U^Tg,
\end{equation}
where $h(\cdot)$ is a filter function defined on eigenvalues.
This process can be interpreted from a GSP perspective: Given a graph signal $g$, it is initially transformed from the spatial domain to the spectral domain via the GFT.  
Subsequently, unwanted frequencies within the signal are removed using the filter $h(\cdot)$, and finally, the filtered signal is transformed back into the spatial domain through the inverse GFT.

Since the smoothness of graph signals is closely linked to the minimization of placement wirelength, a straightforward approach is to develop an ideal low-pass filter that directly eliminates all high-frequency signals. This can be formulated as follows,
\begin{equation}\label{eq:ideal_filter}
h(\lambda_i)=\left\{
\begin{aligned}
 1,\ &\text{if} \ \lambda_i<\lambda_t\\
 0,\ &\text{otherwise} \\
\end{aligned}
\right.
\end{equation}
where $\lambda_t$ denotes the cut-off frequency. 
Then the filtered signals $g^{\prime}$ are given by 
\begin{equation}\label{eq:unified_framework}
g^{\prime}=\mathcal{H}g=Udiag(\overbrace{1, \ldots, 1}^t, \overbrace{0,\ldots,0}^{n-t})U^Tg=U_tU_t^Tg.
\end{equation}

Although this approach guarantees the global smoothness of graph signals, it requires eigendecomposition to obtain the eigenvectors corresponding to the first $k$ lowest eigenvalues, which is a computationally expensive task for large-scale circuits.
In addition, it is essential to note that globally smooth signals overlook valuable local information, which can be characterized as globally high-frequency yet locally smooth~\cite{LLG2023}. This oversight may lead to an over-smoothing issue, where neighboring locations become too similar, resulting in high overlap in local areas and violations of density constraints.


To this end, we need to thoughtfully devise a new graph filter that is computationally efficient while considering multi-resolution smooth signals, thereby yielding optimized cell locations $g^{\prime}$ that have minimized total wirelength (representing smoothness over the graph structure) and reduced overlap (i.e., avoid over-smoothing).

\subsection{GiFt: An Efficient Placement Speedup Technique}~\label{sec:method2}
This section introduces GiFt, a GSP-based placement acceleration approach. GiFt functions as an efficient graph filter, utilizing multi-frequency graph signals for comprehensive graph structural analysis and the generation of optimized cell locations. It can be seamlessly integrates with analytical placers to produce high-quality placement solutions while significantly reducing placement time. The GiFt-equipped placement process is depicted in Figure~\ref{fig:workflow}.

\subsubsection{The algorithm of GiFt.}~\label{sec:gift}
This section presents the theoretical underpinnings of GiFt.
From the perspective of GSP, the normalized adjacency matrix $\tilde{A}$ of the given circuit graph corresponds to the filter function $h(\lambda)=1-\lambda$. The theoretical proof is as follows.
\begin{theorem}~\label{proof:adj_filter}
The normalized adjacency matrix $\tilde{A}=D^{-\frac{1}{2}}AD^{-\frac{1}{2}}$ is a graph filter corresponding to the filter function $h(\lambda)=1-\lambda$.
\end{theorem}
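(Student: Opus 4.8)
The plan is to exploit the elementary algebraic relationship between the normalized adjacency matrix and the normalized Laplacian, and then pass to the spectral domain. First I would expand the definition of the normalized Laplacian using $L = D - A$:
$$\tilde{L} = D^{-\frac{1}{2}} L D^{-\frac{1}{2}} = D^{-\frac{1}{2}}(D - A)D^{-\frac{1}{2}} = I - D^{-\frac{1}{2}}AD^{-\frac{1}{2}} = I - \tilde{A},$$
which rearranges immediately to $\tilde{A} = I - \tilde{L}$. This reduces the theorem to showing that $I - \tilde{L}$ realizes the filter $h(\lambda) = 1 - \lambda$.

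Next, since $\tilde{L}$ is real and symmetric (being $D^{-\frac{1}{2}}LD^{-\frac{1}{2}}$ with $L$ symmetric), it admits an orthonormal eigendecomposition $\tilde{L} = U \Lambda U^T$ with $\Lambda = \mathrm{diag}(\lambda_1, \dots, \lambda_n)$ and $UU^T = U^T U = I$. Substituting into $\tilde{A} = I - \tilde{L}$ and writing $I = U U^T$ yields
$$\tilde{A} = U U^T - U \Lambda U^T = U(I - \Lambda)U^T = U\,\mathrm{diag}(1 - \lambda_1, \dots, 1 - \lambda_n)\,U^T.$$
Comparing this with the general graph-filter form in Eq.~\ref{eq:filter}, I would read off $h(\lambda_i) = 1 - \lambda_i$ for every $i$, which establishes the claim.

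The only delicate point — and the step I would be most careful about — is notational consistency regarding which Laplacian supplies the spectrum. The eigendecomposition and the filter definition introduced in Section~\ref{sec:gsp} are written for the unnormalized $L$, whereas the clean identity $\tilde{A} = I - \tilde{L}$ forces the relevant frequencies to be the eigenvalues of the \emph{normalized} Laplacian $\tilde{L}$. I would therefore state explicitly that, throughout this theorem, $U$ and $\{\lambda_i\}$ denote the eigenvectors and eigenvalues of $\tilde{L}$, so that the GFT basis, the spectral coordinates, and the filter all live on the normalized spectrum. Once that convention is fixed, the argument is merely a one-line algebraic identity followed by diagonalization, with no approximation or convergence issues to control.
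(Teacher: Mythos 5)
Your proof is correct and follows essentially the same route as the paper's: both rest on the identity $\tilde{L} = I - \tilde{A}$ followed by diagonalization, with the paper arguing eigenvector-by-eigenvector and you arguing at the matrix level. Your explicit caveat that the relevant spectrum is that of the \emph{normalized} Laplacian is well taken --- the paper's own proof writes $Lu_i=\lambda_i u_i$ where it means $\tilde{L}u_i=\lambda_i u_i$, so your version actually tidies up a notational slip rather than diverging from the argument.
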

 
\begin{proof}
As the eigendecomposition of the normalized graph Laplacian is given by $\tilde{L}=U\Lambda U^T=Udiag(\lambda_1,\lambda_2, \ldots, \lambda_n)U^T$, $\tilde{L}$ corresponds to the filter function $h(\lambda)=\lambda$.

Since we have $L=D-A$, then
\begin{equation}\label{eq:eq1}
\tilde{L}=D^{-\frac{1}{2}}LD^{-\frac{1}{2}}=D^{-\frac{1}{2}}(D-A)D^{-\frac{1}{2}}=I-D^{-\frac{1}{2}}AD^{-\frac{1}{2}}
\end{equation}

Let $u_i$ denote the eigenvector corresponding to the eigenvalue $\lambda_i$, and $Lu_i=\lambda_i u_i$, we have
\begin{equation}\label{eq:eq3}
Lu_i=(I-D^{-\frac{1}{2}}AD^{-\frac{1}{2}})u_i=\lambda_i u_i.
\end{equation}
It can be transformed to 
\begin{equation}\label{eq:eq4}
D^{-\frac{1}{2}}AD^{-\frac{1}{2}}u_i=(1-\lambda_i)u_i
\end{equation}
which indicates that $\tilde{A}$ is a graph filter corresponding to the filter function $h(\lambda)=1-\lambda$, that is, 
\begin{equation}\label{eq:eq4}
\tilde{A}=U\Lambda_A U^T=Udiag(1-\lambda_1,1-\lambda_2, \ldots, 1-\lambda_n)U^T
\end{equation}
\end{proof}

\begin{figure*}[htbp]
\includegraphics[width=0.88\textwidth]{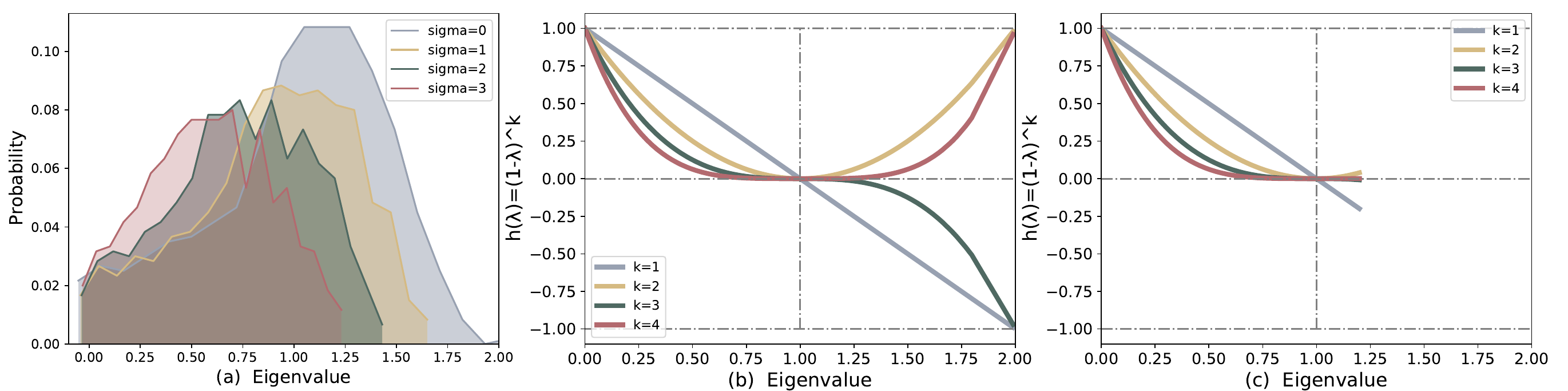}
\centering  
\vspace{-0.5em}
\caption{Eigenvalue distributions of normalized Laplacians with different graph filters: (a) under different self-loops. (b) with $\tilde{A}$ filter at various $k$, (c) with augmented $\tilde{A}$ filter at various $k$.}\label{fig:tmp}
\vspace{-1em}
\end{figure*}

Since the eigenvalues of the normalized Laplacian fall within the interval of $[0,2]$, $\tilde{A}$ acts as a band-stop filter that attenuates intermediate-frequency components, which is not effective for denoising graph signals. 

To address this problem, we introduce two enhancements to $\tilde{A}$ to boost its denoising ability.
Firstly, we add self-loops to $\tilde{A}$ (called augmented $\tilde{A}$) to shrink high-frequency components (e.g. large eigenvalues)~\cite{sgc}, thereby removing high-frequency noises.
As depicted in Figure~\ref{fig:tmp}(a), by adding self-loops as follows $A=A+\sigma I$ (where $\sigma=0, 1, 2, 3$ in Figure~\ref{fig:tmp}(a)), the large eigenvalues become smaller, leading the augmented $\tilde{A}$ to perform like a low-pass filter. 
It is important to note that if $\sigma$ is too large, most eigenvalues will approach zero, rendering the graph filter less effective in noise removal.
Secondly, we generalize the augmented $\tilde{A}$ to $\tilde{A}^k$. This modification transforms its corresponding filter function from $h(\lambda)=1-\lambda$ to $h_k(\lambda)=(1-\lambda)^k$, where $k$ controls the strength of the graph filter. Figure~\ref{fig:tmp}(b) and Figure~\ref{fig:tmp}(c) show the filter strength associated with $\tilde{A}$ and augmented $\tilde{A}$ for various values of $k$. As $k$ increases, the low-pass filtering effect of the augmented $\tilde{A}$ becomes more pronounced.
The underlying dataset for Figure~\ref{fig:tmp} is mgc\_edit\_dist\_1 benchmark in the ISPD2014 benchmark suite~\cite{ispd2014}. 
To this end, by adjusting the values of $\sigma$ and $k$, we can effectively regulate the extent to which high-frequency signals are filtered out, ultimately achieving smooth signals across multiple resolutions.

Building upon this analysis, we propose \textbf{GiFt, functioning as multi-frequency graph filters}, to generate optimized cell locations as follows, 
\begin{equation}\label{eq:proposed_filter}
g^{\prime}=GiFt(g)=\alpha_0\tilde{A}_2^2g+\alpha_1\tilde{A}_4^2g+\alpha_2\tilde{A}_4^4g.
\end{equation}
Here, $\tilde{A}_1^2$, $\tilde{A}_2^2$ and $\tilde{A}_4^3$ all function as low-pass filters with varying degrees of filtering strength, which are formulated as follows,
\begin{equation}\label{eq:A_formulation}
\begin{aligned}
    \tilde{A}_2^2=&{((D+2I)^{-\frac{1}{2}}(A+2I)(D+2I)^{-\frac{1}{2}})}^2 \\
    \tilde{A}_4^2=&{((D+4I)^{-\frac{1}{2}}(A+4I)(D+4I)^{-\frac{1}{2}})}^2 \\
    \tilde{A}_4^4=&{((D+4I)^{-\frac{1}{2}}(A+4I)(D+4I)^{-\frac{1}{2}})}^4 \\
\end{aligned}
\end{equation}
To differentiate, we refer to $\tilde{A}_2^2$ as a \emph{high-pass filter} that allows some relatively high-frequency signals to pass through to capture local information. $\tilde{A}_4^2$ corresponds to a \emph{medium-pass filter}, and $\tilde{A}_4^4$ represents a \emph{low-pass filter} designed to exclusively preserve low-frequency signals, thereby enhancing global smoothness.
$\alpha_0$, $\alpha_1$ and $\alpha_2$ are weight coefficients that determine the proportion of globally smooth signals and locally smooth signals.
$g$ symbolizes the input signals, representing the initial cell locations.
$g^{\prime}$ denotes denoised graph signals, i.e., cell locations produced by GiFt.

\subsubsection{The beneficial attributes of GiFt for placement.}
The input to GiFt is initial cell locations denoted by $g$ in Eq.~\ref{eq:proposed_filter}. We can simply set the locations of movable cells at the center of the placement region, following a Gaussian distribution $N(0,1)$, while fixed cells are placed at their predetermined locations. 
Next, we prove that GiFt can generate optimized locations that both consider the predetermined fixed locations and ensure smoothness across the graph.

To achieve this goal, the optimization objective can be formulated as follows:
\begin{equation}\label{eq:incremental}
min\{||g-g^{\prime}||_{2}^{2}+tr(g^{\prime T}Lg^{\prime})\},
\end{equation}
where $g$ denotes the initial cell locations containing predetermined fixed cell locations, $g^{\prime}$ denotes the predicted cell locations, and $L$ denotes the graph Laplacian matrix. 
The first term enforces that the predicted cell locations account for the fixed cell locations, while the second term contributes to the smoothness of the graph.

By setting the derivative of Eq.~\ref{eq:incremental} to zero, we arrive at the following expression:
\begin{equation}\label{eq:derivative}
g^{\prime}=(I+L)^{-1}g
\end{equation}
Since $L$ associates with the filter function $\lambda$ as proved in Theorem~\ref{proof:adj_filter}, $(I+L)^{-1}$ can be transformed into the spectral domain. This transformation yields the graph filter:
\begin{equation}\label{eq:spectral}
h(\lambda)=(1+\lambda)^{-1}
\end{equation}
Then we approximate $h(\lambda)$ using its first-order Taylor expansion: 
\begin{equation}\label{eq:taylor}
\hat{h}(\lambda)=1-\lambda.
\end{equation}
By transforming $\hat{h}(\lambda)$ back into the spatial domain, we obtain:
\begin{equation}\label{eq:final}
\hat{h}(L)=I-L=D^{-\frac{1}{2}}AD^{-\frac{1}{2}}=\tilde{A}.
\end{equation}
To further enhance the filtering capability, we can add self-loops $\sigma$ and a scaled coefficient $k$ into Eq.~\ref{eq:final}. 
Finally, the predicted cell locations can be obtained by
\begin{equation}\label{eq:final}
g^{\prime}=\tilde{A}_{\sigma}^kg,
\end{equation}
This forms the basis of GiFt.

To sum up, GiFt can capture smooth signals across multiple resolutions, effectively preventing over-smoothing. Simultaneously, it can take the design-specific predetermined locations of fixed cells into account. As a result, it enables comprehensive exploration of the given circuit graphs to generate optimized placement solutions.

\subsubsection{The workflow of GiFt-equipped placement process.}
Since density constraints are not rigorously enforced in Eq.~\ref{eq:proposed_filter}, the predicted solutions $g^{\prime}$ can undergo further refinement through the subsequent placer. By integrating GiFt with the analytical placer, we develop the ultra-fast placement flow GiFt-Placer.

\begin{algorithm}[]
    \renewcommand{\algorithmicrequire}{\textbf{Input:}}
    \renewcommand{\algorithmicensure}{\textbf{Output:}}
    \small \caption{GiFt-Placer Algorithm} \label{alg:gift_placement}
    \footnotesize
    \begin{algorithmic}[1]
        \REQUIRE A circuit netlist modeled by an undirected weighted graph $G = (V, E)$
        \ENSURE  Legalized placement result,
        \STATE Set initial locations $g$ of cells $v \in V$. Locations of movable cells $g_m \sim N(0,1)$ are placed at the center of the placement region. Fixed cells are set at their predetermined locations.
        \STATE Compute optimized cell locations $g^{\prime}$ using GiFt: $g^{\prime}=\alpha_0\tilde{A}_2^2g+\alpha_1\tilde{A}_4^2g+\alpha_2\tilde{A}_4^4g$
        \STATE Input $g^{\prime}$ to subsequent placer to complete placement task
        \STATE \textbf{return} The legalized placement result
    \end{algorithmic}
\end{algorithm}

The algorithm of GiFt-Placer is summarized in Alg.~\ref{alg:gift_placement}.
Given a circuit netlist, we use clique net model~\cite{clique} to convert it into a weighted graph $G$. The graph is then processed by GiFt to generate optimized cell locations $g^{\prime}$ (Line 1-2). $g^{\prime}$ is then fed into the analytical placer, serving as the starting point for placement optimization, to complete the placement task with substantially reduced iteration time (Line 3-4).

Next, we elaborate on the power of GiFt as follows.

\textbf{Reduced Iteration Count. }
As GiFt can capture comprehensive graph structures from globally smooth signals to locally smooth ones, it can produce optimized cell locations with reduced overlap. By integrating the predicted locations with analytical placers, the placement process benefits from significantly reduced iterations, as the locations predicted by GiFt provide clear and valuable guidance for the subsequent optimization process.

\textbf{Low Computation Complexity.}
GiFt excels in efficiency through sparse matrix multiplication. Its calculation only involves the normalization of the adjacency matrix, eliminating the need for the time-consuming parameter learning process with back-propagation. Consequently, its computation complexity is $O(e)$, where $e$ represents the number of edges.


\subsection{Analyzing Existing Placers from GSP Perspective}~\label{sec:method3}
This section analyzes the classic eigenvector-based placer and recently emerged GCN-based placer from the GSP perspective. It is proved that both of them are special cases of the proposed approach with different graph filters.

\subsubsection{Eigenvector-based Placer}
The classic eigenvector-based placer~\cite{eigen_placer} assumes that all cells are movable. They use the eigenvectors corresponding to the second and third smallest eigenvalues as cell locations.
From the GSP perspective, it is an ideal low-pass filter that only saves the lowest frequencies. The corresponding filter function can be defined as 

\begin{equation}\label{eq:ideal_filter}
h(\lambda_i)=\left\{
\begin{aligned}
 1,\ &\text{if} \ 1<i <=3\\
 0,\ &\text{otherwise} \\
\end{aligned}
\right.
\end{equation}
As eigenvectors (graph signals in the spectrum domain) are directly employed to represent cell locations, there is no need to transform them into the spatial domain using inverse GFT $U$.
Therefore, the filtered signal can be calculated by 
\begin{equation}
g^{\prime}=diag(0, 1, 1, 0, \ldots, 0)U=[u_2, u_3]
\end{equation}

Although this method achieves signal smoothness, it incurs high computational costs when applied to large-scale circuits due to the necessity of eigendecomposition. Moreover, as demonstrated in Section~\ref{sec:method1}, confining graph signals to merely the lowest-frequency components leads to the loss of valuable information present in higher-frequency components, yielding suboptimal solutions.

\subsubsection{GCN-based Placer}
To enhance the placement process, many recent studies have recruited GCNs~\cite{graphplanner, CY2021} for help. These studies apply GCNs to encode connectivity information and generate node embeddings for subsequent calculations.
Node embeddings (denoted as $g^{\prime}$) are computed through GCNs using the formula $g^{\prime}=\mathcal{F}(D^{-\frac{1}{2}}AD^{-\frac{1}{2}}XW)$, where $\mathcal{F}$ denotes the activation function, and the kernel is the normalized adjacency matrix $\tilde{A}$.
As proved in Theorem~\ref{proof:adj_filter}, $\tilde{A}$ corresponds to the filter function $h(\lambda)=1-\lambda$, which can be considered a special case of GiFt.

However, GCNs introduce the learnable parameter $W$, which requires a time-consuming training process via back-propagation. This training overhead is redundant for efficient chip placement. The underlying reason is that the statistical characteristics of graph structures can vary significantly across different circuit designs, and the presence of fixed cells (e.g., fixed IOs) exacerbates this issue. Consequently, it is challenging to train a set of fixed parameters that would be effective across a range of diverse designs.
On the other hand, by carefully designing the graph filter, it can efficiently remove high-frequency noises and produce optimized cell locations based on multi-resolution smooth signals. As demonstrated in Table~\ref{tab:placement_result}, our proposed approach without the need for model training, achieves competitive or superior performance compared to GCNs, providing further evidence in support of our assertion.

\section{Experimental Results} \label{sec:experiment}
This section evaluates the performance of GiFt using the academic benchmark suite ISPD2014 and three real-world industrial designs.

\subsection{Experiment Settings}
GiFt is implemented in Python with PyTorch. 
The coefficients $\alpha_0$, $\alpha_1$ and $\alpha_2$ in Eq.~\ref{eq:proposed_filter} are set to 0.1, 0.7 and 0.2, respectively.
We integrate GiFt with two state-of-the-art placers, RePlAce~\cite{replace} and GPU-accelerated DREAMPlace~\cite{dreamplace3}, to construct GiFt-RePlAce and GiFt-DREAMPlace placement flows.
The integration involves using GiFt-generated cell locations as starting points for movable cells in both RePlAce and DREAMPlace. These placers then complete the placement process, resolving overlaps and producing legalized placement results.
To show the efficacy of GiFt, we evaluate HPWL, total runtime, and the number of placement iterations on ISPD2014 benchmark suite~\cite{ispd2014} (see Table~\ref{tab:placement_result}). 
We also compare GiFt-DREAMPlace against several other competitors, including eigenvector-based initial placement~\cite{eigen_placer}-DREAMPlace (EI-DREAMPlace for short), an initial placement (using the eigenvectors corresponding to the second and third smallest eigenvalues) and placement flow, and GraphPlanner-DREAMPlace~\cite{graphplanner}, a state-of-the-art integrated GCN-based floorplanning and placement flow. 
RePlAce and GiFt-RePlAce are performed on a workstation with Intel i7-7700 3.6GHz CPU and 16GB memory. All other experiments are conducted on a Linux server with 16-core Inter Xeon Gold 6226R @ 2.9GHz and NVIDIA 2080 Ti GPU.

\begin{table*}[]
\caption{Experimental results on ISPD2014 benchmarks. The values in parentheses represent the time spent by GiFt. RePlAce and GiFt-RePlAce operate on CPUs. DREAMPlace and GiFt-DREAMPlace run on GPUs.} \label{tab:placement_result}
\centering
\scriptsize
\resizebox{\textwidth}{17mm}{
\begin{tabular}{|c||c|c|ccc|ccc||ccc|ccc|}
\hline
\                            &                          &                        & \multicolumn{3}{c|}{RePlAce}                                                                         & \multicolumn{3}{c|}{GiFt-RePlAce }                                                               & \multicolumn{3}{c|}{DREAMPlace}                                                                               & \multicolumn{3}{c|}{GiFt-DREAMPlace}                                                            \\ \cline{4-15} 
\multirow{-2}{*}{Benchmark} & \multirow{-2}{*}{\#Cells} & \multirow{-2}{*}{\#Net}                           & \multicolumn{1}{c|}{\begin{tabular}[c]{@{}c@{}}HPWL \\ (10\textasciicircum{}6)\end{tabular}} & \multicolumn{1}{c|}{Iteration} & \begin{tabular}[c]{@{}c@{}}Runtime\\ (s)\end{tabular} & \multicolumn{1}{c|}{\begin{tabular}[c]{@{}c@{}}HPWL\\ (10\textasciicircum{}6)\end{tabular}} & \multicolumn{1}{c|}{Iteration}    & \begin{tabular}[c]{@{}c@{}}Runtime\\ (s)\end{tabular} & \multicolumn{1}{c|}{\begin{tabular}[c]{@{}c@{}}HPWL\\ (10\textasciicircum{}6)\end{tabular}} & \multicolumn{1}{c|}{Iteration} & \begin{tabular}[c]{@{}c@{}}Runtime\\ (s)\end{tabular} & \multicolumn{1}{c|}{\begin{tabular}[c]{@{}c@{}}HPWL\\ (10\textasciicircum{}6)\end{tabular}} & \multicolumn{1}{c|}{Iteration}    & \begin{tabular}[c]{@{}c@{}}Runtime\\ (s)\end{tabular} \\ \hline
mgc\_des\_perf\_1          & 112644                   & 112878                 & \multicolumn{1}{c|}{\textbf{5.43}} & \multicolumn{1}{c|}{492}       & 78      & \multicolumn{1}{c|}{5.45}           & \multicolumn{1}{c|}{\textbf{351}} & \textbf{52 (1.79)} & \multicolumn{1}{c|}{\textbf{5.54}}  & \multicolumn{1}{c|}{568}       & 52      & \multicolumn{1}{c|}{5.66}           & \multicolumn{1}{c|}{\textbf{433}} & \textbf{31 (0.54)} \\ \hline
mgc\_des\_perf\_2          & 112644                   & 112878                 & \multicolumn{1}{c|}{\textbf{5.56}} & \multicolumn{1}{c|}{489}       & 82      & \multicolumn{1}{c|}{5.71}           & \multicolumn{1}{c|}{\textbf{351}} & \textbf{56 (1.77)} & \multicolumn{1}{c|}{\textbf{5.96}}  & \multicolumn{1}{c|}{579}       & 50      & \multicolumn{1}{c|}{6.23}           & \multicolumn{1}{c|}{\textbf{447}} & \textbf{35 (0.25)} \\ \hline
mgc\_edit\_dist\_1         & 130661                   & 133223                 & \multicolumn{1}{c|}{13.92}         & \multicolumn{1}{c|}{491}       & 140     & \multicolumn{1}{c|}{\textbf{13.92}} & \multicolumn{1}{c|}{\textbf{346}} & \textbf{96 (3.93)} & \multicolumn{1}{c|}{\textbf{14.24}} & \multicolumn{1}{c|}{614}       & 64      & \multicolumn{1}{c|}{14.25}          & \multicolumn{1}{c|}{\textbf{438}} & \textbf{47 (0.43)} \\ \hline
mgc\_edit\_dist\_2         & 130661                   & 133223                 & \multicolumn{1}{c|}{13.67}         & \multicolumn{1}{c|}{495}       & 141     & \multicolumn{1}{c|}{\textbf{13.66}} & \multicolumn{1}{c|}{\textbf{348}} & \textbf{93 (3.71)} & \multicolumn{1}{c|}{13.98}          & \multicolumn{1}{c|}{658}       & 67      & \multicolumn{1}{c|}{\textbf{13.97}} & \multicolumn{1}{c|}{\textbf{445}} & \textbf{49 (0.44)} \\ \hline
mgc\_fft                   & 32281                    & 33307                  & \multicolumn{1}{c|}{1.93}          & \multicolumn{1}{c|}{398}       & 52      & \multicolumn{1}{c|}{\textbf{1.93}}  & \multicolumn{1}{c|}{\textbf{325}} & \textbf{30 (3.75)} & \multicolumn{1}{c|}{1.96}           & \multicolumn{1}{c|}{579}       & 32      & \multicolumn{1}{c|}{\textbf{1.95}}  & \multicolumn{1}{c|}{\textbf{460}} & \textbf{23 (0.43)} \\ \hline
mgc\_matrix\_mult          & 155325                   & 158527                 & \multicolumn{1}{c|}{10.12}         & \multicolumn{1}{c|}{440}       & 103     & \multicolumn{1}{c|}{\textbf{10.07}} & \multicolumn{1}{c|}{\textbf{330}} & \textbf{72 (1.82)} & \multicolumn{1}{c|}{10.43}          & \multicolumn{1}{c|}{628}       & 62      & \multicolumn{1}{c|}{\textbf{10.39}} & \multicolumn{1}{c|}{\textbf{496}} & \textbf{43 (0.17)} \\ \hline
mgc\_pci\_bridge32\_1      & 30675                    & 30835                  & \multicolumn{1}{c|}{0.95}          & \multicolumn{1}{c|}{448}       & 49      & \multicolumn{1}{c|}{\textbf{0.94}}  & \multicolumn{1}{c|}{\textbf{349}} & \textbf{32 (1.45)} & \multicolumn{1}{c|}{1.13}           & \multicolumn{1}{c|}{742}       & 33      & \multicolumn{1}{c|}{\textbf{1.01}}  & \multicolumn{1}{c|}{\textbf{529}} & \textbf{22 (0.11)} \\ \hline
mgc\_pci\_bridge32\_2      & 30675                    & 30835                  & \multicolumn{1}{c|}{0.97}          & \multicolumn{1}{c|}{450}       & 50      & \multicolumn{1}{c|}{\textbf{0.96}}  & \multicolumn{1}{c|}{\textbf{349}} & \textbf{32 (1.05)} & \multicolumn{1}{c|}{1.02}           & \multicolumn{1}{c|}{623}       & 32      & \multicolumn{1}{c|}{\textbf{1.02}}  & \multicolumn{1}{c|}{\textbf{502}} & \textbf{19 (1.08)} \\ \hline
ratio                      &                          &                        & \multicolumn{1}{c|}{1.00}          & \multicolumn{1}{c|}{1.35}      & 1.50    & \multicolumn{1}{c|}{1.00}           & \multicolumn{1}{c|}{1.00}         & 1.00               & \multicolumn{1}{c|}{1.00}           & \multicolumn{1}{c|}{1.33}      & 1.46    & \multicolumn{1}{c|}{1.00}           & \multicolumn{1}{c|}{1.00}         & 1.00               \\ \hline

\end{tabular}}
\end{table*}

\begin{table*}[]
\caption{Experimental results on ISPD2014 benchmarks. In the ``EI-DREAMPlace'' column, the values in parentheses represent the time taken for eigendecomposition. In the ``GraphPlanner-DREAMPlace'' column, the values in parentheses represent model training time. In the ``GiFt-DREAMPlace'' column, the values in parentheses represent the time spent by GiFt.} \label{tab:placement_result2}
\centering
\scriptsize
\begin{tabular}{|c||ccc|ccc|ccc|}
\hline
\multirow{2}{*}{Benchmark} & \multicolumn{3}{c|}{EI-DREAMPlace}                                                & \multicolumn{3}{c|}{GraphPlanner-DREAMPlace}                                                                                   & \multicolumn{3}{c|}{GiFt-DREAMPlace}                                                           \\ \cline{2-10} 
                           & \multicolumn{1}{c|}{HPWL (10\textasciicircum{}6)} & \multicolumn{1}{c|}{Iteration} & Runtime   (s) & \multicolumn{1}{c|}{HPWL   (10\textasciicircum{}6)} & \multicolumn{1}{c|}{Iteration} & Runtime   (s)                                            & \multicolumn{1}{c|}{HPWL   (10\textasciicircum{}6)}  & \multicolumn{1}{c|}{Iteration}    & Runtime   (s)        \\ \hline
mgc\_des\_perf\_1     & \multicolumn{1}{c|}{\textbf{5.65}}  & \multicolumn{1}{c|}{481}       & 57 (7)  & \multicolumn{1}{c|}{5.93}  & \multicolumn{1}{c|}{454}       & 33            & \multicolumn{1}{c|}{5.66}           & \multicolumn{1}{c|}{\textbf{433}} & \textbf{31 (0.54)} \\ \hline
mgc\_des\_perf\_2     & \multicolumn{1}{c|}{\textbf{5.99}}  & \multicolumn{1}{c|}{495}       & 59 (7)  & \multicolumn{1}{c|}{6.11}  & \multicolumn{1}{c|}{460}       & 30            & \multicolumn{1}{c|}{6.23}           & \multicolumn{1}{c|}{\textbf{447}} & \textbf{35 (0.25)} \\ \hline
mgc\_edit\_dist\_1    & \multicolumn{1}{c|}{\textbf{14.26}} & \multicolumn{1}{c|}{473}       & 66 (4)  & \multicolumn{1}{c|}{14.41} & \multicolumn{1}{c|}{500}       & 44            & \multicolumn{1}{c|}{14.25}          & \multicolumn{1}{c|}{\textbf{438}} & \textbf{47 (0.43)} \\ \hline
mgc\_edit\_dist\_2    & \multicolumn{1}{c|}{13.97}          & \multicolumn{1}{c|}{490}       & 68 (4)  & \multicolumn{1}{c|}{14.11} & \multicolumn{1}{c|}{491}       & 54            & \multicolumn{1}{c|}{\textbf{13.97}} & \multicolumn{1}{c|}{\textbf{445}} & \textbf{49 (0.44)} \\ \hline
mgc\_fft              & \multicolumn{1}{c|}{1.96}           & \multicolumn{1}{c|}{498}       & 52 (15) & \multicolumn{1}{c|}{2.02}  & \multicolumn{1}{c|}{462}       & 22            & \multicolumn{1}{c|}{\textbf{1.95}}  & \multicolumn{1}{c|}{\textbf{460}} & \textbf{23 (0.43)} \\ \hline
mgc\_matrix\_mult     & \multicolumn{1}{c|}{10.41}          & \multicolumn{1}{c|}{539}       & 59 (2)  & \multicolumn{1}{c|}{10.52} & \multicolumn{1}{c|}{498}       & 41            & \multicolumn{1}{c|}{\textbf{10.39}} & \multicolumn{1}{c|}{\textbf{496}} & \textbf{43 (0.17)} \\ \hline
mgc\_pci\_bridge32\_1 & \multicolumn{1}{c|}{1.14}           & \multicolumn{1}{c|}{535}       & 28 (2)  & \multicolumn{1}{c|}{1.08}  & \multicolumn{1}{c|}{604}       & 26            & \multicolumn{1}{c|}{\textbf{1.01}}  & \multicolumn{1}{c|}{\textbf{529}} & \textbf{22 (0.11)} \\ \hline
mgc\_pci\_bridge32\_2 & \multicolumn{1}{c|}{1.15}           & \multicolumn{1}{c|}{533}       & 28 (2)  & \multicolumn{1}{c|}{1.03}  & \multicolumn{1}{c|}{511}       & 20            & \multicolumn{1}{c|}{\textbf{1.02}}  & \multicolumn{1}{c|}{\textbf{502}} & \textbf{19 (0.10)} \\ \hline
ratio                 & \multicolumn{1}{c|}{1.001}          & \multicolumn{1}{c|}{1.09}      & 1.56    & \multicolumn{1}{c|}{1.01}  & \multicolumn{1}{c|}{1.06}      & 3.14 (+0.16h) & \multicolumn{1}{c|}{1.00}           & \multicolumn{1}{c|}{1.00}         & 1.00               \\ \hline
                          
\end{tabular}
\vspace{-1em}
\end{table*}

\subsection{Performance}

\subsubsection{Comparison to the state-of-the-art placers}
Table~\ref{tab:placement_result} presents the placement results using ISPD2014 benchmarks.
The execution time of GiFt is less than 5 seconds on the CPU platform and less than 1 second on the GPU platform. 
From the result we can see that with similar HPWL, GiFt-RePlAce shows 35\% and 50\% reductions in the number of iterations and total runtime compared to RePlAce. Moreover, compared with GPU-accelerated DREAMPlace, GiFt-DREAMPlace achieves a 33\% reduction in the average number of placement iterations and a 46\% reduction in total runtime.
These results demonstrate GiFt's capability to significantly accelerate the placement process.

Figure~\ref{fig:fig3} (a) presents cell locations produced by GiFt on mgc\_edit\_dist\_2 benchmark. To investigate the effect of GiFt, we compare the density curves during placement with and without GiFt. As illustrated in Figure~\ref{fig:fig3} (b), GiFt can effectively drive the placer to bypass the time-consuming initialization process and complete placement with significantly fewer iterations. Owing to the high efficiency of GiFt, the total runtime of the placement process is substantially reduced.

\begin{figure}[]
\includegraphics[width=\linewidth]{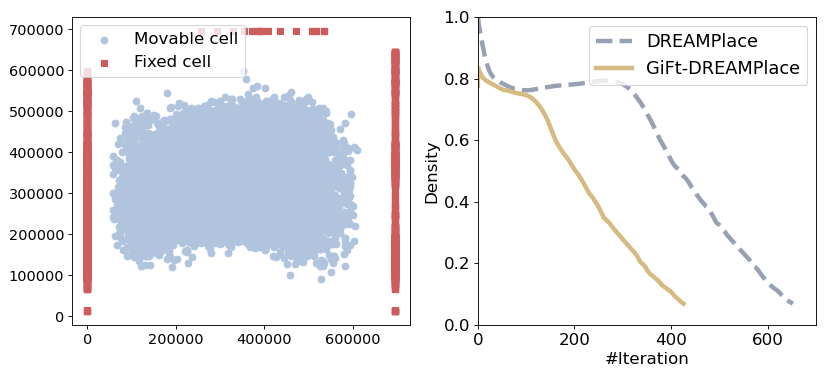}
\centering  
\caption{(a) Cell locations produced by GiFt. (b) Density curves during placement.}\label{fig:fig3}
\end{figure}

\subsubsection{Comparison to other initial placement strategies.}
Table~\ref{tab:placement_result2} presents the placement results with different initial placement-placement flows. The GPU version of DREAMPlace is used as the subsequent placer in this experiment. Compared with EI-DREAMPlace and GraphPlanner-DREAMPlace, GiFt-DREAMPlace achieves 0.1\% and 1\% improvements in HPWL, 9\% and 6\% reductions in the number of iterations, and 56\% and 3x reductions in total runtime. This result reinforces the findings presented in Section~\ref{sec:method3} and further proves the efficacy of GiFt in accelerating the placement process.
 
\section{Discussion} \label{sec:discussion}

This section provide the theoretical foundations about why GiFt can drive the gradient descent-based method (e.g. DREAMPlace) to efficiently solve the placement problem.

The placement problem is a non-convex nonlinear optimization problem. Gradient-descent methods such as DREAMPlace and RePlAce require a large number of optimization iterations and are prone to falling into local optima. A critical factor influencing the solution quality of these gradient-descent methods is their dependency on the initial solution~\cite{eplace_ms}. 
Our GSP-based method provides an optimized starting point for subsequent placers, that is, it allows gradient descent-based placers to begin the optimization process from a more advantageous position. As a result, the placement problem can be tackled more effectively, leading to high-quality solutions with a significantly reduced number of iterations.

\section{Conclusion} \label{sec:conclusion}

This paper presents the power of GSP in accelerating chip placement. By underscoring the significance of signal smoothness in addressing placement problems, we propose GiFt, a highly efficient placement speedup technique designed to leverage multi-resolution smooth signals for the generation of high-quality initial placement solutions. By integrating GiFt with analytical placers, GiFt-Placer substantially reduces placement optimization iterations without the need for time-consuming model training. 
Experimental results demonstrate that GiFt-Placer consistently achieves competitive or superior performance compared to state-of-the-art placers. In particular, it significantly enhances placement efficiency and even outperforms analytical placers running on GPU versions.

\begin{acks}
  This research is supported in part by the National Natural Science Foundation of China under Grant 62090025 and Grant 92373207; and in part by the National Key Research and Development Program of China under Grant 2023YFB4405101, Grant 2022YFB4400400, Grant 2023YFB4405103, and Grant 2023YFB4405104.
\end{acks}

\bibliographystyle{ACM-Reference-Format}
\bibliography{refs}


\begin{thebibliography}{14}


\ifx \showCODEN    \undefined \def \showCODEN     #1{\unskip}     \fi
\ifx \showDOI      \undefined \def \showDOI       #1{#1}\fi
\ifx \showISBNx    \undefined \def \showISBNx     #1{\unskip}     \fi
\ifx \showISBNxiii \undefined \def \showISBNxiii  #1{\unskip}     \fi
\ifx \showISSN     \undefined \def \showISSN      #1{\unskip}     \fi
\ifx \showLCCN     \undefined \def \showLCCN      #1{\unskip}     \fi
\ifx \shownote     \undefined \def \shownote      #1{#1}          \fi
\ifx \showarticletitle \undefined \def \showarticletitle #1{#1}   \fi
\ifx \showURL      \undefined \def \showURL       {\relax}        \fi
\providecommand\bibfield[2]{#2}
\providecommand\bibinfo[2]{#2}
\providecommand\natexlab[1]{#1}
\providecommand\showeprint[2][]{arXiv:#2}

\bibitem[{Cheng} et~al\mbox{.}(2019)]%
        {replace}
\bibfield{author}{\bibinfo{person}{C. {Cheng}}, \bibinfo{person}{A.~B.
  {Kahng}}, \bibinfo{person}{I. {Kang}}, {and} \bibinfo{person}{L. {Wang}}.}
  \bibinfo{year}{2019}\natexlab{}.
\newblock \showarticletitle{RePlAce: Advancing Solution Quality and Routability
  Validation in Global Placement}.
\newblock \bibinfo{journal}{\emph{IEEE Transactions on Computer-Aided Design of
  Integrated Circuits and Systems}} \bibinfo{volume}{38}, \bibinfo{number}{9}
  (\bibinfo{year}{2019}), \bibinfo{pages}{1717--1730}.
\newblock


\bibitem[Cheng and Yan(2021)]%
        {CY2021}
\bibfield{author}{\bibinfo{person}{Ruoyu Cheng} {and} \bibinfo{person}{Junchi
  Yan}.} \bibinfo{year}{2021}\natexlab{}.
\newblock \showarticletitle{On Joint Learning for Solving Placement and Routing
  in Chip Design}.
\newblock \bibinfo{journal}{\emph{Advances in Neural Information Processing
  Systems}}  \bibinfo{volume}{34} (\bibinfo{year}{2021}).
\newblock


\bibitem[Frankle and Karp(1986)]%
        {eigen_placer2}
\bibfield{author}{\bibinfo{person}{J. Frankle} {and} \bibinfo{person}{R.
  Karp}.} \bibinfo{year}{1986}\natexlab{}.
\newblock \showarticletitle{Circuit placements and cost bounds by eigenvector
  decomposition}. In \bibinfo{booktitle}{\emph{IEEE/ACM International
  Conference on Computer-Aided Design.}}
\newblock


\bibitem[Gu et~al\mbox{.}(2020)]%
        {dreamplace3}
\bibfield{author}{\bibinfo{person}{Jiaqi Gu}, \bibinfo{person}{Zixuan Jiang},
  \bibinfo{person}{Yibo Lin}, {and} \bibinfo{person}{David~Z. Pan}.}
  \bibinfo{year}{2020}\natexlab{}.
\newblock \showarticletitle{DREAMPlace 3.0: Multi-Electrostatics Based Robust
  VLSI Placement with Region Constraints}. In \bibinfo{booktitle}{\emph{2020
  IEEE/ACM International Conference On Computer Aided Design.}}
  \bibinfo{pages}{1--9}.
\newblock


\bibitem[Hall(1970)]%
        {eigen_placer}
\bibfield{author}{\bibinfo{person}{Kenneth~M Hall}.}
  \bibinfo{year}{1970}\natexlab{}.
\newblock \showarticletitle{An r-dimensional quadratic placement algorithm}.
\newblock \bibinfo{journal}{\emph{Management science}} \bibinfo{volume}{17},
  \bibinfo{number}{3} (\bibinfo{year}{1970}), \bibinfo{pages}{219--229}.
\newblock


\bibitem[Liu et~al\mbox{.}(2023)]%
        {LLG2023}
\bibfield{author}{\bibinfo{person}{Jiahao Liu}, \bibinfo{person}{Dongsheng Li},
  \bibinfo{person}{Hansu Gu}, \bibinfo{person}{Tun Lu}, \bibinfo{person}{Peng
  Zhang}, \bibinfo{person}{Li Shang}, {and} \bibinfo{person}{Ning Gu}.}
  \bibinfo{year}{2023}\natexlab{}.
\newblock \showarticletitle{Personalized Graph Signal Processing for
  Collaborative Filtering}. In \bibinfo{booktitle}{\emph{Proceedings of the ACM
  Web Conference 2023}}. \bibinfo{pages}{1264–1272}.
\newblock


\bibitem[Liu et~al\mbox{.}(2022)]%
        {graphplanner}
\bibfield{author}{\bibinfo{person}{Yiting Liu}, \bibinfo{person}{Ziyi Ju},
  \bibinfo{person}{Zhengming Li}, \bibinfo{person}{Mingzhi Dong},
  \bibinfo{person}{Hai Zhou}, \bibinfo{person}{Jia Wang}, \bibinfo{person}{Fan
  Yang}, \bibinfo{person}{Xuan Zeng}, {and} \bibinfo{person}{Li Shang}.}
  \bibinfo{year}{2022}\natexlab{}.
\newblock \showarticletitle{GraphPlanner: Floorplanning with Graph Neural
  Network}.
\newblock \bibinfo{journal}{\emph{ACM Trans. Des. Autom. Electron. Syst.}}
  \bibinfo{volume}{1}, \bibinfo{number}{1} (\bibinfo{year}{2022}),
  \bibinfo{numpages}{25}~pages.
\newblock


\bibitem[Lu et~al\mbox{.}(2015)]%
        {2015eplace}
\bibfield{author}{\bibinfo{person}{Jingwei Lu}, \bibinfo{person}{Pengwen Chen},
  \bibinfo{person}{Chin-Chih Chang}, \bibinfo{person}{Lu Sha},
  \bibinfo{person}{Dennis Jen-Hsin Huang}, \bibinfo{person}{Chin-Chi Teng},
  {and} \bibinfo{person}{Chung-Kuan Cheng}.} \bibinfo{year}{2015}\natexlab{}.
\newblock \showarticletitle{ePlace: Electrostatics-based placement using fast
  fourier transform and Nesterov's method}.
\newblock \bibinfo{journal}{\emph{ACM Transactions on Design Automation of
  Electronic Systems (TODAES)}} \bibinfo{volume}{20}, \bibinfo{number}{2}
  (\bibinfo{year}{2015}), \bibinfo{pages}{1--34}.
\newblock


\bibitem[{Lu} et~al\mbox{.}(2015)]%
        {eplace_ms}
\bibfield{author}{\bibinfo{person}{J. {Lu}}, \bibinfo{person}{H. {Zhuang}},
  \bibinfo{person}{P. {Chen}}, \bibinfo{person}{H. {Chang}},
  \bibinfo{person}{C.~C. {Chang}}, \bibinfo{person}{Y.~C. {Wong}},
  \bibinfo{person}{L. {Sha}}, \bibinfo{person}{D. {Huang}}, \bibinfo{person}{Y.
  {Luo}}, \bibinfo{person}{C.~C. {Teng}}, {and} \bibinfo{person}{C.~K.
  {Cheng}}.} \bibinfo{year}{2015}\natexlab{}.
\newblock \showarticletitle{ePlace-MS: Electrostatics-Based Placement for
  Mixed-Size Circuits}.
\newblock \bibinfo{journal}{\emph{IEEE Transactions on Computer-Aided Design of
  Integrated Circuits and Systems}} \bibinfo{volume}{34}, \bibinfo{number}{5}
  (\bibinfo{year}{2015}), \bibinfo{pages}{685--698}.
\newblock


\bibitem[Mirhoseini et~al\mbox{.}(2021)]%
        {google21}
\bibfield{author}{\bibinfo{person}{Azalia Mirhoseini}, \bibinfo{person}{Anna
  Goldie}, \bibinfo{person}{Mustafa Yazgan}, \bibinfo{person}{Joe~Wenjie
  Jiang}, \bibinfo{person}{Ebrahim Songhori}, \bibinfo{person}{Shen Wang},
  \bibinfo{person}{Young-Joon Lee}, \bibinfo{person}{Eric Johnson},
  \bibinfo{person}{Omkar Pathak}, \bibinfo{person}{Azade Nazi},
  {et~al\mbox{.}}} \bibinfo{year}{2021}\natexlab{}.
\newblock \showarticletitle{A graph placement methodology for fast chip
  design}.
\newblock \bibinfo{journal}{\emph{Nature}} \bibinfo{volume}{594},
  \bibinfo{number}{7862} (\bibinfo{year}{2021}), \bibinfo{pages}{207--212}.
\newblock


\bibitem[Shuman et~al\mbox{.}(2013)]%
        {graph_signal}
\bibfield{author}{\bibinfo{person}{David~I. Shuman}, \bibinfo{person}{Sunil~K.
  Narang}, \bibinfo{person}{Pascal Frossard}, \bibinfo{person}{Antonio Ortega},
  {and} \bibinfo{person}{Pierre Vandergheynst}.}
  \bibinfo{year}{2013}\natexlab{}.
\newblock \showarticletitle{The Emerging Field of Signal Processing on Graphs:
  Extending High-Dimensional Data Analysis to Networks and Other Irregular
  Domains}.
\newblock \bibinfo{journal}{\emph{{IEEE} Signal Process. Mag.}}
  \bibinfo{volume}{30}, \bibinfo{number}{3} (\bibinfo{year}{2013}),
  \bibinfo{pages}{83--98}.
\newblock


\bibitem[Viswanathan and Chu(2005)]%
        {clique}
\bibfield{author}{\bibinfo{person}{N. Viswanathan} {and}
  \bibinfo{person}{C.C.-N. Chu}.} \bibinfo{year}{2005}\natexlab{}.
\newblock \showarticletitle{FastPlace: efficient analytical placement using
  cell shifting, iterative local refinement,and a hybrid net model}.
\newblock \bibinfo{journal}{\emph{IEEE Transactions on Computer-Aided Design of
  Integrated Circuits and Systems}} \bibinfo{volume}{24}, \bibinfo{number}{5}
  (\bibinfo{year}{2005}), \bibinfo{pages}{722--733}.
\newblock


\bibitem[Wu et~al\mbox{.}(2019)]%
        {sgc}
\bibfield{author}{\bibinfo{person}{Felix Wu}, \bibinfo{person}{Amauri Souza},
  \bibinfo{person}{Tianyi Zhang}, \bibinfo{person}{Christopher Fifty},
  \bibinfo{person}{Tao Yu}, {and} \bibinfo{person}{Kilian Weinberger}.}
  \bibinfo{year}{2019}\natexlab{}.
\newblock \showarticletitle{Simplifying graph convolutional networks}. In
  \bibinfo{booktitle}{\emph{International conference on machine learning}}.
  PMLR, \bibinfo{pages}{6861--6871}.
\newblock


\bibitem[Yutsis et~al\mbox{.}(2014)]%
        {ispd2014}
\bibfield{author}{\bibinfo{person}{Vladimir Yutsis}, \bibinfo{person}{Ismail~S.
  Bustany}, \bibinfo{person}{David Chinnery}, \bibinfo{person}{Joseph~R.
  Shinnerl}, {and} \bibinfo{person}{Wen-Hao Liu}.}
  \bibinfo{year}{2014}\natexlab{}.
\newblock \showarticletitle{ISPD 2014 Benchmarks with Sub-45nm Technology Rules
  for Detailed-Routing-Driven Placement}. In
  \bibinfo{booktitle}{\emph{Proceedings of the 2014 on International Symposium
  on Physical Design}}. \bibinfo{pages}{161–168}.
\newblock


\end{thebibliography}

\end{document}